\newcommand{\rotateRPY}[4][0/0/0]
{   \pgfmathsetmacro{\rollangle}{#2}
	\pgfmathsetmacro{\pitchangle}{#3}
	\pgfmathsetmacro{\yawangle}{#4}
	
	\pgfmathsetmacro{\newxx}{cos(\yawangle)*cos(\pitchangle)}
	\pgfmathsetmacro{\newxy}{sin(\yawangle)*cos(\pitchangle)}
	\pgfmathsetmacro{\newxz}{-sin(\pitchangle)}
	\path (\newxx,\newxy,\newxz);
	\pgfgetlastxy{\nxx}{\nxy};
	
	\pgfmathsetmacro{\newyx}{cos(\yawangle)*sin(\pitchangle)*sin(\rollangle)-sin(\yawangle)*cos(\rollangle)}
	\pgfmathsetmacro{\newyy}{sin(\yawangle)*sin(\pitchangle)*sin(\rollangle)+ cos(\yawangle)*cos(\rollangle)}
	\pgfmathsetmacro{\newyz}{cos(\pitchangle)*sin(\rollangle)}
	\path (\newyx,\newyy,\newyz);
	\pgfgetlastxy{\nyx}{\nyy};
	
	\pgfmathsetmacro{\newzx}{cos(\yawangle)*sin(\pitchangle)*cos(\rollangle)+ sin(\yawangle)*sin(\rollangle)}
	\pgfmathsetmacro{\newzy}{sin(\yawangle)*sin(\pitchangle)*cos(\rollangle)-cos(\yawangle)*sin(\rollangle)}
	\pgfmathsetmacro{\newzz}{cos(\pitchangle)*cos(\rollangle)}
	\path (\newzx,\newzy,\newzz);
	\pgfgetlastxy{\nzx}{\nzy};
	
	\foreach \x/\y/\z in {#1}
	{   \pgfmathsetmacro{\transformedx}{\x*\newxx+\y*\newyx+\z*\newzx}
		\pgfmathsetmacro{\transformedy}{\x*\newxy+\y*\newyy+\z*\newzy}
		\pgfmathsetmacro{\transformedz}{\x*\newxz+\y*\newyz+\z*\newzz}

	}
}
\tikzset{RPY/.style={x={(\nxx,\nxy)},y={(\nyx,\nyy)},z={(\nzx,\nzy)}}}
\tikzset{
	buffer/.style={
		draw,
		shape border rotate=-90,
		isosceles triangle,
		isosceles triangle apex angle=60,
		fill=red,
		node distance=2cm,
		minimum height=4em
	}
}
\newcommand\pgfmathsinandcos[3]{%
	\pgfmathsetmacro#1{sin(#3)}%
	\pgfmathsetmacro#2{cos(#3)}%
}
\DeclareMathOperator*{\argmin}{arg\,min}
\newcommand\proj{\mathcal{P}}
\newcommand{\f}{\mathit{f}} 
\newcommand{\x}{\mathbf{u}} 
\newcommand{\X}{\mathbf{U}} 
\newcommand{\bX}{\mathbf{X}} 
\newcommand{\bx}{\mathbf{x}} 
\newcommand{\ox}{\check{\x}} 
\newcommand{\hX}{\hat{\X}} 
\newcommand{\C}{\mathbf{C}} 
\newcommand{\cP}{\mathbf{P}} 
\newcommand{\R}{\mathbf{R}} 
\newcommand{\K}{\mathbf{K}} 
\newcommand{\Roi}{\mathcal{R}}
\newcommand{\lf}{\ell_\infty}
\theoremstyle{plain}\newtheorem{theorem}{Theorem}
\theoremstyle{plain}\newtheorem{proposition}{Proposition}
\theoremstyle{plain}
\theoremstyle{plain}
\theoremstyle{plain}\newtheorem{conjecture}{Conjecture}
\theoremstyle{remark} 
\theoremstyle{definition}\newtheorem{definition}{Definition} 
\newcommand\norm[1]{\left\lVert#1\right\rVert}
\begin{document}
\pgfplotsset{compat=1.12}

\newcommand{\reportname}{Bound and Conquer: Improving Triangulation by Enforcing Consistency}
\title{\reportname}
	\author{Adam~Scholefield,~\IEEEmembership{Member,~IEEE,}
	Alireza~Ghasemi,~\IEEEmembership{Student Member,~IEEE,}
		and~Martin~Vetterli,~\IEEEmembership{Fellow,~IEEE}
		%
		\thanks{This work was in part presented in \cite{ghasemi2015accuracy} and \cite{ghasemi2016shape}.}
		\thanks{Authors are with with the School of Computer and Communication Sciences, Ecole Polytechnique F\'ed\'erale de Lausanne (EPFL), CH-1015 Lausanne, Switzerland (e-mail: firstname.surname@epfl.ch).}
		\thanks{This work was supported by the Commission for Technology and Innovation (CTI) project no. 14842.1 PFES-ES and ERC Advanced Grant---Support for Frontier Research---SPARSAM Nr: 247006.}
		\thanks{A. Ghasemi was additionally supported by a Qualcomm Innovation Fellowship.}}

	\markboth{}%
	{}

\IEEEcompsoctitleabstractindextext
{%
	\begin{abstract}
	
	
		
	We study the accuracy of triangulation in multi-camera systems with respect to the number of cameras. We show that, under certain conditions, the optimal achievable reconstruction error decays quadratically as more cameras are added to the system. 
	Furthermore, we analyse the error decay-rate of major state-of-the-art algorithms with respect to the number of cameras. To this end, we introduce the notion of consistency for triangulation, and show that consistent reconstruction algorithms achieve the optimal quadratic decay, which is asymptotically faster than some other methods.
	Finally, we present simulations results supporting our findings. Our simulations have been implemented in MATLAB and the resulting code is available in the supplementary material.
	\end{abstract}
	%
	\begin{IEEEkeywords}
	Multi-camera imaging; multiple-view geometry; triangulation.
	\end{IEEEkeywords}

}

\maketitle

\IEEEdisplaynotcompsoctitleabstractindextext

\IEEEpeerreviewmaketitle


\section{Introduction}

Cameras are finite; yet, we commonly assume Gaussian noise models with infinite tails. Clearly, this disparity, which appears across all of science, is offset by the approximation accuracy and mathematical convenience of the Gaussian distribution. However, by replacing cost functions based on the $\ell_2$-norm with their $\ell_\infty$-counterparts, the computer vision community has begun to investigate the feasibility of bounded noise models~\cite{hartley2007optimal,kahl2008multiple,donne2015point,freundlich2015exact}. 

In this paper, we continue this trend by formalising some of the benefits of bounded noise models in triangulation problems; i.e., problems which aim to estimate the three-dimensional (3-D) positions of feature points from their two-dimensional (2-D) projections. 

For example, consider triangulating from a set of calibrated cameras. In the noise-free, infinite-resolution case, the exact location of the world point can be reconstructed by intersecting rays originating from each camera. However, in practice, various sources of uncertainty mean that the rays do not necessarily intersect. Error minimisation techniques are thus employed with the goal of finding the most probable reconstructed world point~\cite{hartley1997triangulation,kahl2008multiple}. The degree to which we achieve this goal depends on two factors of the cost function: its correct modelling of the uncertainty and how accurately we can find its global minimum. 

The simplest approach is to construct an over-complete set of linear equations, each corresponding to one of the rays, and take the pseudo-inverse to find the least-squares solution. Although we find the global minimum, the cost function is not particularly meaningful and it is unlikely that it provides the best model of the underlying uncertainty. However, this technique performs reasonable well, particularly if the coordinates are correctly normalised~\cite{hartley1997triangulation,hartley1997defense}, and it is a good choice when time complexity is the principal concern.


Alternatively, we can attempt to minimise the $\ell_2$-norm of the reprojection error between the prospective 3-D points and the known image locations, which results in the maximum-likelihood estimator if we assume that the projected points are subjected to i.i.d. zero-mean Gaussian noise in the image plane. Although this assumption is very likely a much better approximation of the underlying uncertainty, the resulting cost function is non-convex and extremely difficult to solve. 

Although for a small number of views the global minimum can be found by polynomial root finding~\cite{hartley1997triangulation,kanatani2011optimal,kanatani2008triangulation,stewenius2005hard,hedborg2014robust,byrod2007fast,nordberg2008efficient}, the degree of the resulting polynomial grows quadratically with the number of views~\cite{stewenius2005hard} and thus this is, in general, not practical. Therefore, often, we have to resort to iterative approaches that only converge to a local minimum, or more time-consuming branch and bound techniques~\cite{agarwal2008practical}. On a positive note, it is possible to verify whether a solution is globally optimal in the $\ell_2$-sense~\cite{hartley2013verifying}. 


Inspired by these difficulties, the $\ell_\infty$-norm has recently been considered as a measure of the reprojection error~\cite{kahl2008multiple,hartley2007optimal,mittelmann2003independent}. Although this may not correspond to the best model of the underlying uncertainty, the resulting cost is a quasi-convex function and efficient algorithms can find the global optimum~\cite{donne2015point}. However, since the $\ell_\infty$-norm implicitly assumes a bounded noise model, care must be taken to limit the potential catastrophic effect of outliers~\cite{li2007practical,micusik2010localizing}.

In this paper, we analyse the performance of multi-camera systems and reconstruction algorithms under the assumption of bounded noise and pixelisation. We provide two main contributions. First, we prove that, under certain conditions, the highest achievable point localisation accuracy of a multi-camera system is quadratically related to the number of cameras in the system. Second, we introduce the notion of consistency and show that consistent reconstruction algorithms achieve the optimal quadratic decay rate. 

In the stereo case, there have been a number of excellent studies thoroughly analysing the point reconstruction error with relation to multiple parameters ~\cite{blostein1987error,rodriguez1990stochastic}. Furthermore, the error of depth estimation in linear camera arrays, has been analysed~\cite{raynor2013plenoptic}.

However, to the best of our knowledge, there is no work analysing arbitrary camera setups and certainly no work deriving fundamental scaling laws for the accuracy of point reconstruction, with respect to the number of cameras. Given the rapid increase in popularity of multi-camera systems, we hope this analysis provides a significant contribution.

Our work is inspired by results derived from frame quantisation. In particular, similar error decay rates have been derived for signal reconstruction from over-complete quantised projections~\cite{goyal1998quantized,cvetkovic1999source,beferull2003efficient}. In imaging terms, this corresponds to circular arrays of 2-D pixelised orthographic cameras. Furthermore, these results have recently been generalised to uniform bounded noise and any consistent estimate~\cite{rangan2001recursive,powell2016error}. We extend these results to more general arrays of 3-D cameras using central projections, which is more typical in the context of imaging.

In what follows, we give a very brief overview of the problem setup and formally define the triangulation problem. After introducing major state-of-the-art techniques, we present our main results on the error decay rate of consistent reconstruction algorithms and the best possible performance of multi-camera systems. 
Finally, simulations are provided to support our findings.

\section{Background}
\label{sec_imformation}

\subsection{Pinhole camera model}
As is typical, we assume the pinhole camera model, which we will now briefly summarise. For a more thorough introduction, we refer the reader to \cite{hartley2003multiple}.

As depicted in Figure \ref{pinhole_3d}, a pinhole camera projects a point in 3-D space, called the world point, to a point on the camera's 2-D image plane via a central projection.
In homogeneous coordinates, pinhole projection can be expressed as a linear matrix multiplication:
\begin{equation}
\ox_h=\cP\X_h,
\label{eq_pinhole_concise}
\end{equation}
where $\ox_h$ and $\X_h$ are the homogeneous representation of the projected point and world point, respectively. The $3\times 4$ matrix, $\cP$, is the camera matrix, which can be decomposed as
\begin{eqnarray}
\cP=\K\R[\mathbf{I}|-\C].
\end{eqnarray}
Here $\C$ denotes the camera centre, $\R$ the camera orientation, and $|$ the column-wise concatenation operator. Together, the camera centre and the camera orientation form the extrinsic, or pose parameters. Finally, the matrix $\K$ contains the intrinsic camera parameters: namely, the focal length $\f$ and the coordinates of the principal point $p=(c_x,c_y)$.

In this paper, it will generally be more convenient to work with the Cartesian coordinates of Euclidean geometry. In this case, we will write the non-linear pinhole projection as
\begin{equation}
\ox=\proj(\X)=\frac{1}{\ox_h[3]}\begin{bmatrix}\ox_h[1]\\ \ox_h[2] \end{bmatrix},
\label{eq_pinhole_concise_metric}
\end{equation}
where $\ox$ and $\X$ are the Cartesian coordinate representations of the projected point and world point, respectively, and $\ox_h$ is the homogeneous representation of the projected point.

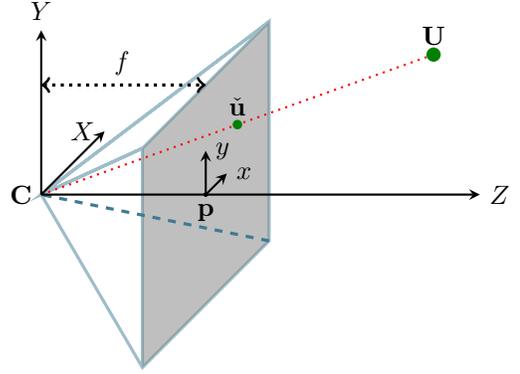
\begin{figure}
	\centering
	\begin{tikzpicture}[scale=.9] 
	\rotateRPY{0}{90}{0}
	\begin{scope}[RPY]
		\rotateRPY{0}{0}{0}
		\begin{scope}[RPY]
			\pgfmathsetmacro\AngleFuite{135}
			\pgfmathsetmacro\coeffReduc{.8}
			\pgfmathsetmacro\clen{2}
			\pgfmathsetmacro\foc{3}
			\pgfmathsinandcos\sint\cost{\AngleFuite}

			\path coordinate (O) at (0,0,0)
			coordinate (C) at (-3,-2,\foc) 
			coordinate (B) at (-3,2,\foc) 
			coordinate (A) at (3,2,\foc) 
			coordinate (S) at (3,-2,\foc)
			coordinate (O) at (0,0,0)
			coordinate (U) at (3,1.4,6)
			coordinate (u) at (1.5,0.7,\foc)
			coordinate (p) at (0,0,\foc);

			\draw[color=black!50!cyan,very thick,opacity=.5] (C) -- (B) -- (A) -- (S) -- (C) -- (O) -- (A) -- (O) -- (B) --  (O) ; 
			\fill[color=gray,fill opacity=.5] (C) -- (B) -- (A) -- (S) -- (C);
			\draw[color=black!50!cyan,very thick,dashed] (S) -- (O);

			\draw[color=red,thick,dotted] (U) -- (O);
			
			
			\fill[color=black!50!green] (U) circle (3pt);
			\fill[color=black!50!green] (u) circle (2pt);
			\fill[color=black] (p) circle (1pt);
			\draw[very thick,dotted,<->] (0,2,0) -- (0,2,\foc) node [midway, above] {$\f$};
			
			\draw[thick,-stealth,black]  (O)  -- (0,0,8) node[right]{$Z$} ; 
			\draw[thick,-stealth,black]   (O)  -- (3,0,0) node[left]{$X$};
			\draw[thick,-stealth,black] (O)  -- (0,3,0) node[above]{$Y$};
			
			\draw[thick,-stealth,black]   (p)  -- (1,0,\foc) node[right]{$x$};
			\draw[thick,-stealth,black] (p)  -- (0,0.8,\foc) node[right]{$y$};
			
			\node[left] at (O) {$\C$};
			\node[below] at (p) {$\mathbf{p}$};
			\node[above] at (U) {$\X$};
			\node[above] at (u) {$\ox$};
		\end{scope}	
	\end{scope}	
	\end{tikzpicture}  
	\caption{An example of central projection in a pinhole camera}
	\label{pinhole_3d}
\end{figure}

\subsection{Sources of uncertainty}
\label{sec_pertmodel}

Due to various sources of uncertainty, the true image location $\ox$ is perturbed to yield the measurement $\x$. The error term, incorporates both deterministic (e.g. pixelisation) and random (e.g. measurement noise) perturbation.

\subsubsection{Pixelisation}
Even when we are in a hypothetical noiseless scenario, we still have to deal with the uncertainty caused by the finite resolution of the camera sensors. 

This source of uncertainty, which we call pixelisation, is deterministic and, when projected back to the world space, leads to semi-infinite regions in the world space instead of rays. These regions, originating from the boundaries of the pixels, partition the world space into a finite number of regions. Each region consists of all world points whose projections map to the same pixel. 
Figure \ref{fig_pixelisation} depicts a simple example of such a partitioning for a camera with sixteen pixels.

When multiple cameras view the same region of interest, these regions intersect producing a finite number of regions, each corresponding to a particular combination of pixels in the cameras. Clearly, smaller regions lead to a smaller uncertainty.

\begin{figure}
\centering
\begin{tikzpicture}[scale = .8]
	\dispcamfullview{1.5}{1}{4}{10}{0}{0}{0};	

\end{tikzpicture}
\caption{Pixelisation in a digital camera.}
\label{fig_pixelisation}	
\end{figure}
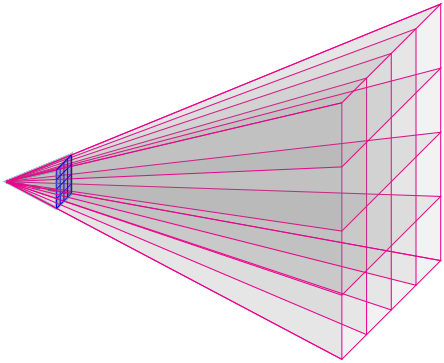

\subsubsection{Non-deterministic sources of uncertainty}
In reality, pixelisation is not the only source of uncertainty, with additionally noises arising from image sensor noise, as well as the error of corner localisation algorithms. 

These perturbation sources, combined with pixelisation, eventually lead to ambiguity in measuring the exact location of an image point. This can be modelled as an additive noise, yielding
\begin{equation}
\x=\ox+\boldsymbol{\epsilon}=\proj(\X)+\boldsymbol{\epsilon}.
\end{equation}


\subsubsection{Bounded noise models}
In this paper, we will be interested in bounded noise models:
\begin{equation}
    \|\boldsymbol{\epsilon}\|_q\leq \delta,
\end{equation}
where $q$ specifies the shape of the bounded noise and $\delta$ is referred to as the bandwidth of the noise.

The main advantage of bounded noise is that it allows us to completely dismiss regions of the solution space as impossible and, as we will show, the size of the remaining feasible region decays in such a way that the squared reconstruction error decays quadratically with the number of measurements.

On may question the applicability of bounded noise. Clearly, in the presence of outliers, additional work must be done to prevent these methods breaking down. However, if outlier techniques are applied, bounded approaches can be useful. It would be interesting to fully investigate this comparison, for practical problems; however, this is beyond the scope of this theoretical study.

\subsection{The triangulation problem}
This paper focuses on triangulation---a fundamental problem in multiple-view geometry, which, as well as being interesting in its own right, provides a basic building block for many higher-level computer vision tasks, such as visual metrology, Simulataneous Localisation And Mapping (SLAM), and Structure from Motion (SfM). 

The aim is to recover the location of an unknown 3-D point $\X$ from its projections in $M$ calibrated cameras; i.e. the camera matrices $\cP_1$ through $\cP_M$ are known and we estimate $\X$ from the measured projections $\x_1,\cdots,\x_M$. To be able to prove things about triangulation, we formally define it as follows:

\begin{definition}\label{def_triang}
	A \textit{triangulation problem} takes as input
	\begin{eqnarray}
		T=\{(\x_i,\cP_i)\,|\,1\leq i \leq M\},
	\end{eqnarray}
	and estimates the underlying unknown 3-D world point $\X$ as follows:
	\begin{equation}
        \hX=\argmin_\bX \sum_{i=1}^M \left\|\x_i-\proj_i(\bX)\right\|^p_{p'}.
        \label{eq_triang}
    \end{equation}
    Here, $\proj{i}{.}$ denotes the projection operator corresponding to the camera matrix $\cP_i$ and the $(\x_i,\cP_i)$ pairs denote the camera matrices of the $M$ cameras along with the projections of the unknown 3-D world point $\X$ on their image planes.
\end{definition}
In the previous definition, $p'$ and $p$, are known as the image-space and residual-space norms, respectively. When we talk about algorithms that minimises the $(\ell_{p'},\ell_p)$-norm of the reprojection error, for some particular $p'$ and $p$, we are referring to the image-space and residual-space norms in this order.

When $p=\infty$, we assume that \eqref{eq_triang} becomes
\begin{equation}
    \hX=\argmin_\bX \max_{i=1..M} \left\|\x_i-\proj_i(\bX)\right\|_{p'}.\label{eq_triang_linf}
\end{equation}
  


\subsection{Equivalence with camera localisation}

As an aside, we briefly mention the connection between triangulation and a restricted version of camera localisation. In particular, by replacing the $M$ cameras and single feature point of the triangulation problem with $M$ feature points and a single camera, one can easily show that the triangulation problem, as just defined, is mathematically equivalent to localising a single camera from the image points of $M$ feature points at known locations. The catch is that the equivalence is only valid if one assumes that the orientation of the camera is known.

Of course, in most practical problems, the camera orientation is not known and heuristics must be applied if one wishes to adapt triangulation techniques to camera localisation. Since in this paper focuses on a mathematical analysis, we restrict our analysis to triangulation but note that the scaling laws we derive also apply to this restricted version of camera localisation. Deriving the scaling laws for the full camera localisation problem is an interesting open research problem.
\section{Reconstruction algorithms}
\label{sec_sota}
In this section, we briefly review the main techniques used to solve triangulation and other geometric reconstruction algorithms.

\subsection{Linear triangulation}
\label{sec_lintr}
The simplest approach to triangulation is to construct a linear system of equations. Let $\boldsymbol{p_l}^T$ be the $l$-th row of a a camera matrix $\cP$. Then,
\begin{equation}
\ox[1]=\frac{\ox_h[1]}{\ox_h[3]}=\frac{\boldsymbol{p_1}^T\X_h}{\boldsymbol{p_3}^T\X_h},
\end{equation}
and similarly for $\ox[2]$.
Therefore, for a single camera, we have
\begin{equation}
\begin{bmatrix}
\ox[1]\boldsymbol{p_3}^T- \boldsymbol{p_1}^T\\
\ox[2]\boldsymbol{p_3}^T- \boldsymbol{p_2}^T\\
\end{bmatrix}\X_h=\boldsymbol{0}.
\end{equation}
For $M$ cameras, we can stack the $2M$ equations into a matrix $\boldsymbol{A}\in\mathbb{R}^{2M\times 4}$, with $\boldsymbol{A}\X_h=\boldsymbol{0}$.

This equation can be solved efficiently with standard techniques, such as the Singular Value Decomposition (SVD). 
However, due to the conversion to homogeneous coordinates, this does not minimise the desired cost; i.e., a norm of the residual vector.

The advantages of linear triangulation are its speed and simplicity and it performs particularly well when the cameras are at almost the same depth to the point of interest. Its robustness can also be further improved by normalising the focal length and other metric distances in the problem instance~\cite{hartley1997triangulation,hartley1997defense}.

\subsection{Reprojection error minimisation}
If increased computational resources are available, one can directly attempt to minimise~\eqref{eq_triang} for different values of $p'$ and $p$.

The most common choice is the $(\ell_2,\ell_2)$-norm, but, as explained in the introduction, the resulting cost function is non-convex and often difficult to solve exactly. Therefore, often, linear triangulation is used to initialise a gradient descent approach thus accepting convergence to a local minimum~\cite{hartley2003multiple,hartley2013verifying}.

Alternatively, more computationally intensive branch and bound techniques can be used, which guarantee convergence to the global minimum. In particular, in~\cite{kahl2008practical}, the authors present a branch and bound technique for minimising the  $(\ell_2,\ell_2)$, $(\ell_2,\ell_1)$ and $(\ell_1,\ell_1)$-norms.

The $\lf$-norm leads corresponds to the assumption of bounded noise. In the case of the $(\ell_2,\ell_\infty)$-norm, the image points are bounded to circles on the image plane, which back project as cones in the $3$-D world space. Consequently, Second Order Cone Programming (SOCP) can be applied~\cite{kahl2008multiple}.


As with all approaches based on bounded noise, $\lf$-based methods suffer from being extremely sensitive to even a single outlier. Therefore, it is critical that these techniques are either combined with standard outlier removal techniques or, as has been recently proposed, relaxations applied~\cite{micusik2010localizing,li2007practical,seo2009outlier}. 

\section{Consistent reconstruction and the accuracy of multi-camera systems}
\label{sec_proofs}
In this section, we present two results concerning the accuracy of multi-camera systems as more cameras are added to the system. 

In the first, we prove a lower bound for the average reconstruction error of any multi-camera system, over a region of interest. This lower bound decreases quadratically as more cameras are added to the system.

Next, we introduce the concepts of consistency and consistent reconstruction and then prove that the reconstruction error of a consistent reconstruction algorithm is upper bounded by a term that decreases quadratically as more cameras are added to the system. Therefore, consistent reconstruction algorithms achieve the optimal error decay rate. 

This is not necessarily the case for other algorithms: linear triangulation, for example, can be shown to yield a linear error decay rate with respect to the number of cameras.

\subsection{Lower bound for the accuracy of a multi-camera system}

We would like to lower-bound the best possible reconstruction error achievable by a multi-camera system, using any possible triangulation algorithm. To do this, we first formally define what we mean by a triangulation algorithm.
\begin{definition}
	A \emph{triangulation algorithm} is any mapping from $T=\{ (\x_i,\cP_i) : 1\leq i\leq M \}$ to $\hX\in\mathbb{R}^3$.
	\label{def__tri_alg}
\end{definition}

Since we are seeking a lower-bound, it makes sense to limit the uncertainties in the system. Therefore, in the following theorem, we assume that pixelisation is the only source or uncertainty; however, note that the size of pixels is arbitrary and thus this is a mild assumption that is interesting even if an image point can be localised with subpixel precision.

\begin{theorem}\label{th_lowerbound}
    Consider a multi-camera system of $M$ cameras, each with an $N\times N$ pixel image sensor and define a fixed region of interest, $\Roi$, with a finite non-zero volume.
	
	If we assume that the only source of uncertainty is pixelisation, the expected reconstruction error of any triangulation algorithm is lower-bounded by a term that is inverse-quadratically dependent on the number of cameras; i.e.,
	\begin{equation}
	\mathbb{E}\left(\norm{\hX-\X}^2\right)=\Omega\left(\frac{1}{M^2}\right),
	\end{equation}
	where $\X\in\Roi$ is any point in the region of interest, and $\hX$ is the result of reconstructing $\X$, from its images in the multi-camera system, using any triangulation algorithm. Here, the expectation is taken over the location of the point $\X$ in the region of interest.
\end{theorem}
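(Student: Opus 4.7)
The plan is to exploit the fact that, under pure pixelisation, any triangulation algorithm is piecewise constant on the cells of the partition of $\Roi$ induced by the pixel boundaries of the $M$ cameras, and then to lower bound the resulting quantisation error by a volume argument. Assume $\X$ is uniform on $\Roi$ (the argument extends to any absolutely continuous distribution with bounded density). I would partition $\Roi$ into cells $C_1,\dots,C_K$ by declaring $\X\sim\X'$ iff both project to the same pixel in each camera; each $C_i$ is the intersection of $\Roi$ with $M$ rectangular viewing frusta, hence convex. By Definition~\ref{def__tri_alg} the algorithm sees only the pixel tuple, so $\hX$ is constant on each $C_i$; call its value $\hX_i$. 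Then
\begin{equation}
\mathbb{E}\!\left(\norm{\hX-\X}^2\right)=\frac{1}{\vol{\Roi}}\sum_{i=1}^{K}\int_{C_i}\norm{\hX_i-\mathbf{y}}^2\,\mathrm{d}\mathbf{y},
\end{equation}
and the inner integral, minimised over $\hX_i$ at the centroid of $C_i$, equals $\vol{C_i}\,\sigma^2(C_i)$, where $\sigma^2(C_i)$ is the variance of the uniform law on $C_i$.

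Next, I would prove an isoperimetric inequality for variance in $\mathbb{R}^3$: for any measurable $S$ of volume $v$ and any reference point $c$, $\int_S\norm{\mathbf{y}-c}^2\,\mathrm{d}\mathbf{y}\geq\int_{B}\norm{\mathbf{y}-c}^2\,\mathrm{d}\mathbf{y}$, where $B=B(c,r)$ is the ball centred at $c$ of volume $v$. The proof is a one-line mass swap: $\vol{S\setminus B}=\vol{B\setminus S}$, and points outside $B$ lie at distance $\geq r$ while points inside lie at distance $\leq r$. Evaluating the right-hand side yields $\sigma^2(C_i)\geq c_0\vol{C_i}^{2/3}$ with the explicit constant $c_0=(3/5)(3/(4\pi))^{2/3}$, so that
\begin{equation}
\mathbb{E}\!\left(\norm{\hX-\X}^2\right)\geq\frac{c_0}{\vol{\Roi}}\sum_{i=1}^{K}\vol{C_i}^{5/3}.
\end{equation}

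Finally, I would apply Jensen's inequality to the convex map $t\mapsto t^{5/3}$, giving $\sum_i\vol{C_i}^{5/3}\geq\vol{\Roi}^{5/3}/K^{2/3}$, and then bound $K$ by arrangement counting. Each camera contributes $2(N+1)$ pixel-boundary planes through its optical centre, so the full arrangement consists of $L=O(MN)$ planes in $\mathbb{R}^3$ and hence has at most $\sum_{i=0}^{3}\binom{L}{i}=O(L^3)=O((MN)^3)$ full-dimensional regions. Treating $N$ and $\vol{\Roi}$ as constants, the bounds combine to give
\begin{equation}
\mathbb{E}\!\left(\norm{\hX-\X}^2\right)=\Omega\!\left(\vol{\Roi}^{2/3}/(MN)^2\right)=\Omega(1/M^2).
\end{equation}

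The hardest step is the isoperimetric inequality, because the optimal reference point (the centroid of $C_i$) need not lie inside $C_i$ and the cells are truncated by $\partial\Roi$; the mass-swap argument sidesteps both issues, since $\norm{\mathbf{y}-c}^2\geq r^2$ outside $B(c,r)$ holds regardless of whether $c\in S$, and truncation only shrinks $S$. A secondary subtlety is that if $\Roi$ is non-convex a single pixel tuple may correspond to several connected components, but this only multiplies $K$ by a constant depending on $\Roi$ and therefore does not affect the $M$-asymptotics.
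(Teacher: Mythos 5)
Your proof is correct and follows essentially the same route as the paper's supplementary proof: under pure pixelisation the estimate is constant on the cells cut out by the back-projected pixel boundaries, the number of such cells is bounded combinatorially (an arrangement-type counting bound of order $(MN)^3$, which is exactly the role of the cited counting result), and a per-cell volume-to-squared-error argument combined with convexity gives the $\Omega(1/M^2)$ bound. The mass-swap isoperimetric lemma and the Jensen step are sound as written, so there is no gap to flag.
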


\begin{proof}
For the proof, please refer to the supplementary material.\nocite{goodman1986upper}
\end{proof}

The above theorem states that, under certain assumptions, no triangulation algorithm can do better than a quadratic decay with respect to the number of cameras, regardless of the camera setup. 
In what follows, we show that, if the camera array is properly constructed, the expected reconstruction error of certain triangulation algorithms can be upper-bound by a term that decays quadratically with the number of cameras. Therefore, in doing so, we show that these triangulation algorithms reach the best possible decay rate. So which triangulation algorithms achieve this optimal decay? It turns out that the key property is \emph{consistency}.

\subsection{Consistency and Consistent Reconstruction}
A key advantage of bounded noise models is that, given a noisy image point, we can restrict the true image point to a finite $2$-D region, which we call an \emph{image-space consistency region}:
\begin{eqnarray}
    I_{\x,\delta}=\{  \bx\in\mathbb{R}^2: \|  \bx-\x  \|_q \leq \delta \}.
\end{eqnarray}
The shape of these regions depends on the type of bounded noise: a circle when $q=2$, a diamond when $q=1$ and a square when $q=\infty$. Note that, we assume that $q\geq 1$ so the norm is properly defined. In this case, the image-space consistency region is convex.

If we back-project an image-space consistency region into the world space, we obtain a convex $3$-D region, which we call a \emph{world-space consistency region}:
\begin{eqnarray}
W_{\x,\delta,\cP}=\{  \bX\in\mathbb{R}^3: \|  \x - \proj(\bX) \|_{q} \leq \delta \}.
\end{eqnarray}  
Again, depending on the type of bounded noise ($q$), these $3$-D regions can be either a cone, a diamond-based pyramid, or a square-based pyramid.

We know that the true $3$-D point must lie in the intersection of all world-space consistency regions:
\begin{equation}
V_{\x,\delta,\mathfrak{P}}=\{ \bX \in \mathbb{R}^3:  \wedge_{i=1}^M  \| \x_i - \proj_i(\bX) \|_{q} \leq \delta \},
\label{eq_common_cons_region}
\end{equation}
where $\mathfrak{P}=\{\cP_i: i\in[1,M]\}$.
For a particular geometric reconstruction problem, we call this region the \emph{consistent region} and any estimate that lies within it a \emph{consistent estimate}. In addition, if a reconstruction algorithm always returns a \emph{consistent estimate}, we call it a \emph{consistent reconstruction algorithm}. This is stated more formally in the following definition.

\begin{definition}
A triangulation algorithm is \emph{consistent}, over the region of interest $\Roi$, if
\begin{equation}
	\x_i\in I_{\proj_i(\X),\delta}\quad \forall i\in[1, M]\qquad\Rightarrow\qquad\hX\in V_{\x,\delta,\mathfrak{P}},
\end{equation}
for any $\X\in\Roi$ and valid projection matrices $\mathfrak{P}=\{\cP_i:1\leq i\leq M\}$. Note that a consistent triangulation algorithm returns no estimate when $V_{\x,\delta,\mathfrak{P}}=\varnothing$.
	\label{def__cons_alg}
\end{definition}

\subsection{Finding a consistent estimate}

As stated in the following proposition, $\ell_\infty$-based triangulation is \emph{consistent}.

\begin{proposition}
    Consider a multi-camera system viewing a point and assume that the image points are subjected to $\ell_q$-norm bounded noise:
    \begin{equation*}
        \| \x_i - \proj_i(\bX) \|_{q} \leq \delta\quad\text{for }i=1...M.
    \end{equation*}
    Then, any algorithm that minimises the $(\ell_q,\ell_\infty)$-norm of the reprojection error is a \textit{consistent} triangulation algorithm.
\end{proposition}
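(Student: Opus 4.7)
The plan is to argue directly from the definition of the minimiser. An algorithm that minimises the $(\ell_q,\ell_\infty)$-norm of the reprojection error produces
\begin{equation*}
    \hX = \argmin_{\bX} \max_{i=1..M} \left\| \x_i - \proj_i(\bX) \right\|_q,
\end{equation*}
and consistency, in view of Definition~\ref{def__cons_alg} and equation~\eqref{eq_common_cons_region}, requires exactly that $\left\|\x_i - \proj_i(\hX)\right\|_q \leq \delta$ for every $i \in [1,M]$. So the whole task reduces to bounding the optimal value of the min–max problem by $\delta$.

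First, I would exhibit the true world point $\X$ as a feasible candidate. By the hypothesis of the proposition, $\left\|\x_i - \proj_i(\X)\right\|_q \leq \delta$ for each $i$, hence
\begin{equation*}
    \max_{i=1..M}\left\|\x_i - \proj_i(\X)\right\|_q \leq \delta.
\end{equation*}
Next, since $\hX$ is a minimiser of the same objective, its cost cannot exceed the cost attained at $\X$, so
\begin{equation*}
    \max_{i=1..M}\left\|\x_i - \proj_i(\hX)\right\|_q \leq \max_{i=1..M}\left\|\x_i - \proj_i(\X)\right\|_q \leq \delta.
\end{equation*}
Because this bounds every term of the max individually, we obtain $\left\|\x_i - \proj_i(\hX)\right\|_q \leq \delta$ for all $i$, which is precisely the statement that $\hX\in V_{\x,\delta,\mathfrak{P}}$, i.e.\ that $\hX$ is a consistent estimate.

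There is essentially no hard step: the result is a direct consequence of the fact that the $\ell_\infty$ residual-space norm certifies every per-view residual simultaneously, together with the observation that the true point always produces a feasible (cost $\leq\delta$) candidate. The only subtlety worth flagging is the degenerate case in which $V_{\x,\delta,\mathfrak{P}}=\varnothing$; however, under the standing assumption that the noise is $\ell_q$-bounded by $\delta$, the true point $\X$ always lies in the consistent region, so $V_{\x,\delta,\mathfrak{P}}$ is non-empty and the minimiser $\hX$ is well defined (up to the usual caveat that the minimum may be attained by a set of points, in which case any selection from that set is consistent).
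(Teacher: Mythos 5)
Your proof is correct and follows essentially the same route as the paper's: exhibit the true point $\X$ as a candidate with objective value at most $\delta$, conclude the minimiser's $\ell_\infty$ residual is at most $\delta$, and hence every per-view $\ell_q$ residual is bounded by $\delta$, which is exactly membership in $V_{\x,\delta,\mathfrak{P}}$. Your remark on the non-emptiness of the consistent region under the noise assumption is also the right way to handle the degenerate case in Definition~\ref{def__cons_alg}.
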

\begin{proof}
	For the proof, please refer to the supplementary material.
\end{proof}

For $\ell_2$-bounded noise, the consistent regions are cones and the SOCP technique outlined by Kahl et.al.~\cite{kahl2008multiple} returns a consistent estimate.

For $\ell_\infty$-bounded noise, the following simple linear program (LP) can be used to find a consistent estimate. Recall that, for the linear triangulation algorithm, we used $\boldsymbol{p_l}^T$ to denote the $l$-th row of a camera matrix $\cP$. To avoid homogeneous coordinates, let's separate the first three elements from the last: $\boldsymbol{p_l}^T = \begin{bmatrix} \boldsymbol{\bar{p}_l}^T & p_{l4} \end{bmatrix}$. Then,
\begin{equation}
\ox[1]=\frac{\ox_h[1]}{\ox_h[3]}=\frac{\boldsymbol{\bar{p}_1}^T\X+p_{14}}{\boldsymbol{\bar{p}_3}^T\X+p_{34}},
\end{equation}
and similarly for $\ox[2]$. For bounded noise, with bandwidth $\delta$,
\begin{equation*}
    \x[i]-\delta\leq\ox[i]\leq\x[i]+\delta,\qquad i=1,2.
\end{equation*}
Therefore, for a single camera, we have
\begin{equation*}
\begin{bmatrix}
(\x[1]-\delta)\boldsymbol{\bar{p}_3}^T- \boldsymbol{\bar{p}_1}^T\\
(\x[2]-\delta)\boldsymbol{\bar{p}_3}^T- \boldsymbol{\bar{p}_2}^T\\
\boldsymbol{\bar{p}_1}^T - (\x[1]+\delta)\boldsymbol{\bar{p}_3}^T\\
\boldsymbol{\bar{p}_2}^T - (\x[2]+\delta)\boldsymbol{\bar{p}_3}^T
\end{bmatrix}\X\leq\begin{bmatrix}p_{14}-(\x[1]-\delta)p_{34}\\
p_{24}-(\x[2]-\delta)p_{34}\\
(\x[1]+\delta)p_{34}-p_{14}\\
(\x[2]+\delta)p_{34}-p_{24}
\end{bmatrix}.
\end{equation*}
Note that, here, we have assumed that the point is in front of the camera so that $\boldsymbol{\bar{p}_3}^T\X+p_{34}>0$.

For $M$ cameras, we can stack the $4M$ inequalities producing a matrix $\boldsymbol{A}\in\mathbb{R}^{4M\times3}$ and vector $\boldsymbol{b}\in\mathbb{R}^{4M}$, such that $\boldsymbol{A}\X\leq\boldsymbol{b}$. Any point satisfying these constraints is consistent and thus the following LP will return a consistent estimate:
\begin{equation}
    \hX = \arg\min_\X \boldsymbol{c}^T\X,\quad\text{s.t. }\boldsymbol{A}\X\leq\boldsymbol{b}.\label{eq:LP}
\end{equation}
Here, the vector $\boldsymbol{c}\in\mathbb{R}^3$ dictates which point in the consistent region is the optimum. As we show in the next section, for many cameras, any consistent estimate is good and thus in our simulations we use a standard LP solver with $\boldsymbol{c}=\boldsymbol{0}$.

As an aside, we note that more complex LPs can be used to select a more desirable point in the consistent region. As just stated, this will only be beneficial for small $M$, since asymptotically they will achieve the same performance. 

For example, one can take the previous linear program and remove all redundant constraints. Assume this produces $\bar{M}$ equivalent non-redundant constraints: $\boldsymbol{\bar{A}}\X\leq\boldsymbol{\bar{b}}$. Then, a LP that finds the point in the consistent region that minimises the average distance to the $\bar{M}$ planes at the limit of the constraints can be designed as follows. Let $\boldsymbol{\bar{a}_l}^T$ be the $l$-th row of $\boldsymbol{\bar{A}}$. Then, the minimum distance from any point $\boldsymbol{V}\in\mathbb{R}^3$ to the $l$-th plane, is
\begin{equation}
    d_l:=\frac{\boldsymbol{\bar{a}_l}^T\boldsymbol{V}-\bar{b}_l}{\|\boldsymbol{\bar{a}_l}\|_2}.
\end{equation}
We thus wish to solve
\begin{equation}
    \arg\min_\X \sum_l |d_l|,\quad\text{s.t. }\boldsymbol{\bar{A}}\X\leq\boldsymbol{\bar{b}}.
\end{equation}
It is a standard exercise in linear programming to convert this problem into standard form using $\bar{M}$ auxiliary variable, one for each distance. 


\subsection{Error Decay in Consistent Reconstruction}

We will shortly present a theorem stating that, for circular camera arrays, the expected reconstruction error of consistent algorithms is upper bounded by a term that decays quadratically with the number of cameras. However, since simulations suggest that the result holds in many more cases, including linear camera arrays and even quite general random setups, we present the following more general conjecture, which is numerically tested in the following section.

\begin{conjecture}
    Define the region of interest, $\Roi$, to be a sphere of finite radius $r$ and place a point anywhere in this region.
    Place $M$ cameras inside a larger finite radius sphere, with the same centre as $\Roi$, i.i.d. uniformly at random such that they all see the whole region of interest.
	
	Furthermore, assume that the images of the world point in the cameras are perturbed with uniform bounded noise; i.e., for the world point $\X$, the image $\x_i$ in the $i$-th camera is computed as
	\begin{eqnarray}
	\x_i=\proj_i(\X)+\boldsymbol{\epsilon}_i,
	\end{eqnarray}
	where $\boldsymbol{\epsilon}_i$ is zero-mean uniform bounded random satisfying $\|\boldsymbol{\epsilon}_i\|_q\leq\delta$.
	
	In this situation, the expected reconstruction error of any consistent triangulation algorithm is upper-bounded by a term which decreases quadratically with the number of cameras; i.e.,
	\begin{equation}
	\mathbb{E}\left(\norm{\hX-\X}^2\right)=\mathcal{O}\left(\frac{1}{M^2}\right),
	\label{eq_ere_upbound_conj}
	\end{equation}
	where $\X\in\Roi$ is any point in the region of interest, and $\hX$ is the result of reconstructing $\X$, from its images in the multi-camera system, using a consistent triangulation algorithm. Here, the expectation is taken over both the noise vector $\mathbf{\epsilon}$ and the camera locations.
	\label{conj_upbound}
\end{conjecture}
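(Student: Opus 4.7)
The plan is to derive the quadratic rate by bounding the squared diameter of the consistent region $V = V_{\x, \delta, \mathfrak{P}}$ defined in \eqref{eq_common_cons_region}. Because the noise satisfies $\|\boldsymbol\epsilon_i\|_q \le \delta$, the true point $\X$ is itself consistent, so $\X \in V$; Definition \ref{def__cons_alg} also forces $\hX \in V$ whenever a consistent algorithm returns an estimate. Thus $\|\hX - \X\|_2 \le \operatorname{diam}(V)$, and it suffices to prove $\mathbb{E}[\operatorname{diam}(V)^2] = \mathcal{O}(1/M^2)$, with the expectation taken jointly over camera placement and noise.

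To handle the non-linear projection I would Taylor expand each $\proj_i$ around $\X$: setting $\bX = \X + u$ gives $\proj_i(\bX) = \proj_i(\X) + \J_i u + O(\|u\|^2)$. The Jacobian $\J_i$ has a one-dimensional kernel along the ray $\X - \C_i$ and its two non-trivial singular values are both of order $f_i/\|\X - \C_i\|$. Once $\operatorname{diam}(V)$ is known to be small, the quadratic remainder is uniformly negligible, so each world-space region $W_i$ is well approximated by the affine slab $\{\X + u : \|\J_i u - \boldsymbol\epsilon_i\|_q \le \delta\}$. Under the Conjecture's hypotheses the depths $\|\X - \C_i\|$ stay in a compact interval bounded away from zero, and the ray directions are i.i.d.\ samples from a density on $S^2$ that is bounded above and below. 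A Chernoff-type argument then shows that, with probability at least $1 - e^{-cM}$, for every direction $v \in S^2$ a constant fraction of the cameras have their rays within a prescribed angular band around $v^\perp$.

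The crux is to bound the support function $h_V(v) = \max_{u \in V} v \cdot u$ in a fixed direction. By LP duality applied to the linearised constraints $-\delta \le (\J_i u)_k - (\boldsymbol\epsilon_i)_k \le \delta$, $h_V(v)$ equals the minimum, over non-negative coefficients whose weighted constraint normals sum to $v$, of the corresponding total slack. If a single camera $i$ has a Jacobian row $(\J_i)_{k,:}$ within a constant angle of $v$, a one-term dual witness yields the upper bound $\big(\delta + (\boldsymbol\epsilon_i)_k\big) / \|(\J_i)_{k,:}\|$, up to a bounded constant. Since each noise coordinate is uniform on $[-\delta, \delta]$ and the angular event has probability $\Theta(1)$ per camera, independent of the noise, the joint ``favourable orientation and near-extreme noise'' event has probability $\Theta(t/\delta)$ at slack level $t$; classical order-statistics of $M$ i.i.d.\ draws then put the minimum dual value at $\Theta(\delta/M)$ with overwhelming probability. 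Applied symmetrically to $-v$, this gives $\operatorname{width}(V, v) = \mathcal{O}(\delta/M)$ for each fixed $v$.

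To promote this pointwise-in-$v$ estimate to a diameter bound I would use an $\varepsilon$-net of $\mathcal{O}(1/\varepsilon^2)$ directions on $S^2$, combined with the Lipschitz continuity of $h_V$ (whose constant is controlled by the thickness of any single slab), so that a union bound delivers $\operatorname{diam}(V) = \mathcal{O}(\delta/M)$ with overwhelming probability; on the exponentially small complementary event the diameter is still bounded a priori by $\operatorname{diam}(\Roi)$, contributing only lower-order terms to the expected square. The hardest step is the extreme-value argument in the previous paragraph: the linear-measurement analogue is standard in frame-quantisation theory~\cite{rangan2001recursive, powell2016error}, but here the frame vectors $(\J_i)_{k,:}$, their gains $f_i/\|\X - \C_i\|$ and the effective noise are all functions of the random camera pose; disentangling these couplings while preserving the quantitative Chernoff-type tail bound, uniformly in $\X \in \Roi$ and $v \in S^2$, is the main technical hurdle.
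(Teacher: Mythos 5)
First, note that the paper does not prove this statement: it is stated as Conjecture~\ref{conj_upbound}, supported only by the simulations in Fig.~\ref{fig_verification_conj1}, and the only proved upper bound in the paper is Theorem~\ref{th_upbound_3d} (planar circular arrays), which is reduced in the supplementary material to \cite[Corollary 6.2]{powell2016error}. So you are attempting something the authors deliberately left open, and your overall strategy (bound the extent of the consistent region $V_{\x,\delta,\mathfrak{P}}$ around $\X$, using that both $\X$ and $\hX$ lie in it, and import the random-frame consistent-reconstruction machinery of \cite{rangan2001recursive,powell2016error}) is the natural one.

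However, the crux step as you state it has a genuine gap. A single half-space constraint whose normal is only within a \emph{constant} angle $\theta$ of $v$ does not bound the support function: writing the normal as $\cos\theta\, v+\sin\theta\, w$ with $w\perp v$, the constraint gives $v\cdot u \le (\text{slack} + \sin\theta\,\|u\|)/\cos\theta$, so your ``one-term dual witness'' controls $h_V(v)$ only if you already have an a priori bound on $\|u\|$ of the same order --- exactly what you are trying to prove. To make a one-term argument literally work you would need a normal aligned with $v$ to accuracy $O(1/M)$ \emph{and} slack $O(\delta/M)$, a joint event of probability $O(1/M^2)$ per camera, which the order statistics of $M$ draws cannot rescue. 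The same circularity appears in your $\varepsilon$-net step: the Lipschitz constant of $h_V$ is the circumradius of $V$, not the thickness of a single slab (a slab is unbounded in its two tangential directions). The mechanism that actually yields the $1/M^2$ rate in \cite{powell2016error} is a coverage-process argument: fix $r>0$, cover $S^2$ by a \emph{constant} number of caps of constant angular radius, and observe that a point $u\in V$ with $\|u\|=r$ forces \emph{every} constraint whose normal lies in the cap containing $u/\|u\|$ to have slack at least $r\cos\theta$; each such event has probability $1-\Theta(r/\delta)$, a constant fraction of the $M$ constraints land in each cap with overwhelming probability, so $\Pr\left(\exists u\in V:\|u\|>r\right)\le C e^{-cMr/\delta}$, and integrating in $r$ gives $\mathbb{E}\left(\max_{u\in V}\|u\|^2\right)=\mathcal{O}(\delta^2/M^2)$ with no logarithmic loss. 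Recasting your middle paragraphs in this form (rather than per-direction width plus a net) would close the logical gap.

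Two further points need care and are not mere bookkeeping. (i) The coverage argument requires the distribution of the half-space normals (the rows of $\J_i$, which lie in the plane orthogonal to the ray $\X-\C_i$ and depend on the camera roll) to put mass bounded below on every constant-size cap, uniformly over $\X\in\Roi$; under the conjecture's sampling (uniform centres in a ball, orientations constrained only to see $\Roi$) this is plausible but must be established, and it is where the full 3-D randomness genuinely differs from the planar case the paper actually proves. (ii) The linearisation worry is smaller than you suggest for $q=\infty$ (the world-space constraints are exactly linear in $\X$, as in the paper's LP \eqref{eq:LP}), but for general $q$ you should replace Taylor remainders by sandwiching the convex back-projected regions between half-space outer approximations with $\delta$ inflated by a constant, to avoid another bootstrapping loop.
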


Now, the theorem for circular camera arrays. Once again, simulation results are presented in the following section to support this theorem.

\begin{theorem}
	Place $M$ cameras in a plane, i.i.d. uniformly at random on a finite radius circle oriented towards the centre of the circle. Define the region of interest, $\Roi$, to be the intersection of the field of view of all cameras as $M\rightarrow\infty$ and place a point anywhere in this region.
	
	Furthermore, assume that the images of the world point in the cameras are perturbed with uniform bounded noise; i.e., for the world point $\X$, the image $\x_i$ in the $i$-th camera is computed as
	\begin{eqnarray}
	\x_i=\proj_i(\X)+\boldsymbol{\epsilon}_i,
	\end{eqnarray}
	where $\boldsymbol{\epsilon}_i = [\mathbf{\epsilon}_{i,x},\mathbf{\epsilon}_{i,y}]^T$ and $\mathbf{\epsilon}_{i,x}$, $\mathbf{\epsilon}_{i,y}$ are zero-mean uniform bounded random variables with bandwidth $\delta$.
	
	In this situation, the expected reconstruction error of any consistent triangulation algorithm is upper-bounded by a term which decreases quadratically with the number of cameras; i.e.,
	\begin{equation}
	\mathbb{E}\left(\norm{\hX-\X}^2\right)=\mathcal{O}\left(\frac{1}{M^2}\right),
	\label{eq_ere_upbound_thm}
	\end{equation}
	where $\X\in\Roi$ is any point in the region of interest, and $\hX$ is the result of reconstructing $\X$, from its images in the multi-camera system, using a consistent triangulation algorithm. Here, the expectation is taken over both the noise and the camera locations.
	\label{th_upbound_3d}
\end{theorem}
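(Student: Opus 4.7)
The plan is to reduce the expected squared error to an expected squared diameter of the consistency region $V_{\x,\delta,\mathfrak{P}}$. Since any consistent algorithm returns $\hX\in V_{\x,\delta,\mathfrak{P}}$, and the assumption $\|\boldsymbol{\epsilon}_i\|_q\leq\delta$ implies that the true world point $\X$ also lies in $V_{\x,\delta,\mathfrak{P}}$, we have $\norm{\hX-\X}^2 \leq (\mathrm{diam}\, V_{\x,\delta,\mathfrak{P}})^2$. It therefore suffices to establish that $\mathbb{E}\bigl[(\mathrm{diam}\, V_{\x,\delta,\mathfrak{P}})^2\bigr]=\mathcal{O}(1/M^2)$, with the expectation taken jointly over the noise and the camera angles.

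To analyse the diameter, I would linearize the projections around $\X$. Writing $\mathbf{Y}=\bX-\X$ and $\proj_i(\X+\mathbf{Y})\approx \proj_i(\X)+\mathbf{J}_i\mathbf{Y}$, each constraint $\|\x_i-\proj_i(\bX)\|_q\leq\delta$ becomes $\|\mathbf{J}_i\mathbf{Y}-\boldsymbol{\epsilon}_i\|_q\leq\delta$. For a camera placed at $(R\cos\theta_i, R\sin\theta_i, 0)$ looking toward $\X$ with the usual upright orientation, the two rows of $\mathbf{J}_i$ are, up to a bounded distance-dependent scalar, the in-plane tangent direction $(-\sin\theta_i,\cos\theta_i,0)$ and the vertical direction $(0,0,1)$. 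Consequently, in the linearised problem the vertical component $Y_z$ and the in-plane component $(Y_x,Y_y)$ of $\mathbf{Y}$ decouple, and can be handled separately.

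For the vertical coordinate, every camera supplies a constraint of the form $|\alpha Y_z-\epsilon_{i,y}|\leq\delta$ with $\alpha$ bounded away from $0$, so the feasible set for $Y_z$ is a one-dimensional interval whose length is governed by the gap $\max_i \epsilon_{i,y}-\min_i \epsilon_{i,y}$; standard order statistics for i.i.d.\ uniform variables give this gap an expected squared deficit of $\mathcal{O}(1/M^2)$. For the in-plane coordinates, the constraints reduce to $|\mathbf{a}_i^T(Y_x,Y_y)-\epsilon_{i,x}|\leq\delta$ with unit vectors $\mathbf{a}_i\propto(-\sin\theta_i,\cos\theta_i)$ drawn i.i.d.\ uniformly on the circle, which is precisely the setting of consistent reconstruction from uniformly-distributed $2$-D frame measurements. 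The width of the feasible set in any direction $\mathbf{v}$ is controlled by the pair of cameras whose normals $\mathbf{a}_i$ lie closest to $\pm\mathbf{v}$; the expected angular spacing of $M$ i.i.d.\ uniform angles gives that this gap is $\mathcal{O}(1/M)$, yielding an expected width of $\mathcal{O}(1/M)$ in every direction and hence an expected squared diameter of $\mathcal{O}(1/M^2)$. Combining this with the vertical bound completes the estimate.

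The hardest part will be executing the in-plane argument rigorously: it is not enough to bound the width along a single fixed direction, one must bound it uniformly over all $\mathbf{v}$ on the unit circle and properly couple the randomness in the camera angles $\theta_i$ with the randomness in the image-plane noise $\epsilon_{i,x}$. A secondary technical obstacle is the linearization step: one must verify that, on the high-probability event that $V_{\x,\delta,\mathfrak{P}}$ is contained in a sufficiently small neighbourhood of $\X$, the neglected second-order terms of $\proj_i$ perturb the diameter only by $o(1/M)$, so that the $\mathcal{O}(1/M^2)$ asymptotic is preserved.
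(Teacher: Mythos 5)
Your overall strategy (bound the error by the consistency region, reduce the perspective constraints to linear ``slab'' constraints, and analyse the resulting random polytope) is the right one, and it is in the same spirit as the paper, which reduces the problem to consistent reconstruction from random linear measurements with uniform noise and then invokes Corollary~6.2 of Powell and Whitehouse (the random-polytope/coverage-process error bound cited as \cite{powell2016error}). But your in-plane argument contains a genuine flaw, and the part you defer as ``the hardest part'' is precisely the core of the theorem. The mechanism you state --- ``the width of the feasible set in any direction $\mathbf{v}$ is controlled by the pair of cameras whose normals lie closest to $\pm\mathbf{v}$; the expected angular spacing is $\mathcal{O}(1/M)$, yielding an expected width of $\mathcal{O}(1/M)$'' --- is not correct. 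In the linearised picture each camera contributes a slab $\{|\mathbf{a}_i^T\mathbf{Y}-\epsilon_{i,x}|\leq\delta\}$ of width $2\delta$ containing the true point at a uniformly random offset; the slab whose normal is best aligned with $\mathbf{v}$ therefore constrains the extent along $\mathbf{v}$ only to an interval of length about $2\delta$, \emph{independently of $M$}. Angular density alone shrinks nothing. The $1/M$-per-direction decay comes from the joint statistics of angles and noise offsets: among the $\Theta(\epsilon M)$ constraints whose normals lie within angle $\epsilon$ of $\mathbf{v}$, with high probability at least one has its boundary passing within distance $\mathcal{O}(1/(\epsilon M))$ of the true point on the $+\mathbf{v}$ side (and similarly for $-\mathbf{v}$), and one must balance this against the $\mathcal{O}(\epsilon^2)$ obliquity loss. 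Even granting that, the squared error involves the \emph{worst} consistent point, i.e.\ an expectation of a supremum over all directions, so the per-direction bound must be upgraded to a uniform statement via a coverage/union argument over the circle of directions. That combined angle--offset, uniform-in-$\mathbf{v}$ analysis is exactly what \cite[Cor.~6.2]{powell2016error} supplies and what your sketch leaves open, so the proposal as it stands does not prove the $\mathcal{O}(1/M^2)$ bound.

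Two secondary points. First, the clean decoupling of $Y_z$ from $(Y_x,Y_y)$ only holds when $\X$ lies in the plane of the camera circle; for a general $\X\in\Roi$ the image $y$-coordinate Jacobian row has a component along the optical axis (depth dependence), and the constraint directions $\mathbf{a}_i$ induced by cameras aimed at the circle's centre are not exactly uniform on the circle unless $\X$ is the centre --- both are fixable (bounded-density directions and a bounded coupling term suffice), but they need to be handled rather than asserted. Second, you correctly flag the linearisation error; this, too, must be controlled on the event that the consistent region is small, e.g.\ by showing the perspective constraints are sandwiched between linear constraints with comparable uniform-type offsets, which is essentially how the paper makes the reduction to the linear-frame result rigorous. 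Your treatment of the vertical coordinate via the order statistics of $\min_i(\delta\pm\epsilon_{i,y})$ is sound and is the right one-dimensional intuition; the missing work is in making that intuition hold simultaneously over all in-plane directions.
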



\begin{proof}
	The proof makes use of \cite[Corollary 6.2]{powell2016error} and appears in the supplementary material.
\end{proof}
%
%
\begin{figure*}[t!]
	\begin{minipage}{0.5\columnwidth}
\begin{tikzpicture}[scale=1]
	\begin{axis}[
	    name=axis_name,
		width=1.9\columnwidth,
        height=1.9\columnwidth,
		xlabel=\large{$\log_2 M$},
		ylabel={\large{$\log_2 \mathcal{E}$}},
		legend style={font=\small},xmin=1,ymax=4,
		legend cell align=left,
		]

		\addplot coordinates{
(1.00000000, 18.68840069)
(1.58496250, -1.36626111)
(2.00000000, -1.79548867)
(2.32192809, -2.27428044)
(2.58496250, -2.59161981)
(3.16992500, -3.32442417)
(3.58496250, -3.76413192)
(3.90689060, -4.01742838)
(4.39231742, -4.46934074)
(4.80735492, -4.91214878)
(5.20945337, -5.24581156)
(5.64385619, -5.52634646)
(6.04439412, -5.82354128)
(6.47573343, -6.06979978)
(6.89481776, -6.24842688)
(7.31288296, -6.43894866)
(7.73470962, -6.63091234)
(8.15987134, -6.74483703)
(8.57742883, -6.81719402)
(9.00000000, -6.90165515)
		};
		\addlegendentry{\small Lin. triang.};
		
		\addplot coordinates{
(1.00000000, 4.09153382)
(1.58496250, -1.96836554)
(2.00000000, -2.71876520)
(2.32192809, -3.19790059)
(2.58496250, -3.69956672)
(3.16992500, -4.53267456)
(3.58496250, -5.12061633)
(3.90689060, -5.47086837)
(4.39231742, -6.08904465)
(4.80735492, -6.53811021)
(5.20945337, -7.04383135)
(5.64385619, -7.53085544)
(6.04439412, -7.95137990)
(6.47573343, -8.47795940)
(6.89481776, -8.86797626)
(7.31288296, -9.29185160)
(7.73470962, -9.69783722)
(8.15987134, -10.22404040)
(8.57742883, -10.60459108)
(9.00000000, -11.04296233)
		};
		\addlegendentry{\small Min. ($\ell_2$,$\ell_2$)-norm};
		
		\addplot coordinates{
(1.00000000, 38.45519418)
(1.58496250, -1.77804063)
(2.00000000, -2.61872805)
(2.32192809, -3.02216284)
(2.58496250, -3.50418216)
(3.16992500, -4.44119722)
(3.58496250, -4.96955947)
(3.90689060, -5.29752310)
(4.39231742, -5.84452712)
(4.80735492, -6.24215653)
(5.20945337, -6.51279216)
(5.64385619, -6.90865063)
(6.04439412, -7.29620456)
(6.47573343, -7.66638085)
(6.89481776, -7.99293982)
(7.31288296, -8.44109664)
(7.73470962, -8.69826191)
(8.15987134, -9.14895083)
(8.57742883, -9.46327288)
(9.00000000, -9.81863599)
		};
		\addlegendentry{\small Min. ($\ell_2$,$\ell_\infty$)-norm};
	
		\addplot coordinates{
(1.00000000, 1.16074694)
(1.58496250, -0.19664551)
(2.00000000, -0.96097295)
(2.32192809, -1.50621164)
(2.58496250, -2.09405666)
(3.16992500, -3.07645682)
(3.58496250, -3.96755621)
(3.90689060, -4.53154129)
(4.39231742, -5.47422759)
(4.80735492, -6.27307796)
(5.20945337, -6.98519845)
(5.64385619, -7.97426885)
(6.04439412, -8.77145756)
(6.47573343, -9.65322808)
(6.89481776, -10.48470124)
(7.31288296, -11.22827840)
(7.73470962, -12.18308976)
(8.15987134, -12.97935168)
(8.57742883, -13.84411248)
(9.00000000, -14.56075705)
		};
		\addlegendentry{\small LP for ($\ell_\infty$,$\ell_\infty$)-norm};
		
	\end{axis}
	\node [below=1.2cm] at (axis_name.south) {(a) Additive $\ell_\infty$-norm bounded noise.};
\end{tikzpicture}
	\end{minipage}\hspace{5.2cm}
	\begin{minipage}{0.5\columnwidth}
\begin{tikzpicture}[scale=1]
	\begin{axis}[
		width=1.9\columnwidth,
        height=1.9\columnwidth,
		xlabel=\large{$\log_2 M$},
		ylabel={\large{$\log_2 \mathcal{E}$}},
		legend style={font=\small},xmin=1,ymax=4,
		legend cell align=left,
		]
		\addplot coordinates{
(1.00000000, 6.15331253)
(1.58496250, -1.60482111)
(2.00000000, -2.19561062)
(2.32192809, -2.52182219)
(2.58496250, -3.09040889)
(3.16992500, -3.77218727)
(3.58496250, -4.13739702)
(3.90689060, -4.49617021)
(4.39231742, -4.89803693)
(4.80735492, -5.36931574)
(5.20945337, -5.81172242)
(5.64385619, -6.05403950)
(6.04439412, -6.34862248)
(6.47573343, -6.62304049)
(6.89481776, -6.87553276)
(7.31288296, -7.07534891)
(7.73470962, -7.23073535)
(8.15987134, -7.38942005)
(8.57742883, -7.50229708)
(9.00000000, -7.59961980)
		};
		\addlegendentry{\small Lin. triang.};
		\addplot coordinates{
(1.00000000, 1.05937024)
(1.58496250, -2.32228901)
(2.00000000, -2.99811522)
(2.32192809, -3.49493333)
(2.58496250, -4.04490274)
(3.16992500, -4.84087507)
(3.58496250, -5.45857252)
(3.90689060, -5.81243506)
(4.39231742, -6.49971264)
(4.80735492, -6.93203794)
(5.20945337, -7.41049322)
(5.64385619, -7.91402921)
(6.04439412, -8.41602739)
(6.47573343, -8.77329178)
(6.89481776, -9.25036191)
(7.31288296, -9.70083353)
(7.73470962, -10.14262796)
(8.15987134, -10.65462187)
(8.57742883, -11.05389575)
(9.00000000, -11.47731109)
		};
		\addlegendentry{\small Min. ($\ell_2$,$\ell_2$)-norm};
		\addplot coordinates{
(1.00000000, 0.43731299)
(1.58496250, -2.25339102)
(2.00000000, -2.80721716)
(2.32192809, -3.52660053)
(2.58496250, -4.03003906)
(3.16992500, -5.12795894)
(3.58496250, -5.89856739)
(3.90689060, -6.30656124)
(4.39231742, -7.32247517)
(4.80735492, -8.13004040)
(5.20945337, -8.81319291)
(5.64385619, -9.62088308)
(6.04439412, -10.41848025)
(6.47573343, -11.24963430)
(6.89481776, -11.99554512)
(7.31288296, -12.84131190)
(7.73470962, -13.67146245)
(8.15987134, -14.55176130)
(8.57742883, -15.46630266)
(9.00000000, -16.27764093)
		};
		\addlegendentry{\small Min. ($\ell_2$,$\ell_\infty$)-norm};
		\addplot coordinates{
(1.00000000, 1.01974836)
(1.58496250, -0.20238971)
(2.00000000, -0.78751561)
(2.32192809, -1.30699612)
(2.58496250, -1.84172342)
(3.16992500, -2.69027425)
(3.58496250, -3.29635208)
(3.90689060, -3.71305949)
(4.39231742, -4.46771166)
(4.80735492, -4.97635653)
(5.20945337, -5.64578176)
(5.64385619, -6.19939012)
(6.04439412, -6.80146849)
(6.47573343, -7.37916130)
(6.89481776, -7.94021128)
(7.31288296, -8.50662607)
(7.73470962, -9.13661637)
(8.15987134, -9.68525326)
(8.57742883, -10.25681790)
(9.00000000, -10.79159839)
		};
		\addlegendentry{\small LP for ($\ell_\infty$,$\ell_\infty$)-norm};
	\end{axis}
	\node [below=1.2cm] at (axis_name.south) {(b) Additive $\ell_2$-norm bounded noise.};
\end{tikzpicture}
    \end{minipage}
	\caption{Verification of Conjecture~\ref{conj_upbound}. Expected squared error ($\mathcal{E}$) as a function of the number of cameras ($M$).
	}
	\label{fig_verification_conj1}
\end{figure*}
\section{Simulations}
\label{sec_simultations}
We now present simulations to verify our theoretical results.
To approximate the expected reconstruction error, we take the average of many realizations. This means that any algorithm we use is run many times and thus has to be extremely robust. Unfortunately, this prevented us from using the branch and bound technique proposed in~\cite{kahl2008practical}, since we were unable to prevent their implementation from crashing for certain realizations.
Therefore, for the ($\ell_2$,$\ell_2$)-norm we used the non-linear approach from Hartley and Zisserman~\cite{hartley2003multiple}. The technique is based on Newton iterations and we initialised it with the solution of linear triangulation. For the linear triangulation, we used the normalized implementation by the same authors~\cite{hartley2003multiple}. 
For the ($\ell_2$,$\ell_\infty$)-norm, we used the SOCP implementation provided by Kahl~\cite{kahl2008multiple} and, finally, for the ($\ell_\infty$,$\ell_\infty$)-norm minismisation, we used the linear program (LP) defined in~\eqref{eq:LP} with $\boldsymbol{c}=\boldsymbol{0}$. Of course, this just returns a consistent estimate and does not fully minimise the norm.

\subsection{Numerical simulation of Conjecture~\ref{conj_upbound}}
We simulated the setup explained in Conjecture~\ref{conj_upbound}. In order to generate cameras uniformly at random that see the whole region of interest, we use rejection sampling; i.e., we repeatedly generate a camera centre and rotation matrix uniformly at random and reject ones that do not see the whole region of interest. To generate rotation matrices uniformly at random we use the technique outlined in \cite{Arvo:1992}. 

The results are shown in Fig.~\ref{fig_verification_conj1}, for $\ell_\infty$-norm and $\ell_2$-norm bounded noise. As expected, consistent algorithms matching the norm of the noise have a quadratic decay; i.e., the LP has a quadratic decay for $\ell_\infty$-norm bounded noise and the minimum ($\ell_2$,$\ell_\infty$)-norm has a quadratic decay for $\ell_2$-norm bounded noise.

Non-consistent techniques, such as the ($\ell_2$,$\ell_2$)-norm minimisation, can perform well for small $M$, but, as $M$ increases, fail to reach the quadratic decay.

\subsection{Numerical simulation of Theorem~\ref{th_upbound_3d}}
Finally, we present a simulation to experimentally test Theorem~\ref{th_upbound_3d}. The algorithms are the same as for the simulation of Conjecture~\ref{conj_upbound} and the results are shown in Fig.~\ref{fig_verification_thm2}. As can be seen in the figure, all algorithms behave as expected.
\section{Conclusion}
\label{sec_conclusion}
We presented an analysis of the accuracy of multi-camera systems and their error decay rate with respect to the number of cameras. In doing so, we derived fundamental scaling laws stating that, under certain conditions, the accuracy of a multi-camera imaging system in reconstructing 3-D points increases quadratically with respect to the number of cameras. 

We also analysed the performance of state-of-the-art algorithms with respect to their error decay rate. To do this, we introduced the notion of consistency and showed that consistent reconstruction algorithms achieve the optimal quadratic error decay. Furthermore, we showed that ($\ell_\infty$,$\ell_\infty$)-norm based minimisation is consistent and, in addition, presented two simple linear programs that are also consistent.

\section*{Acknowledgement}
The authors would like to thank Richard Hartley, Fredrik Kahl and Pascal Frossard for their advice and fruitful discussions, which have greatly improved the manuscript.

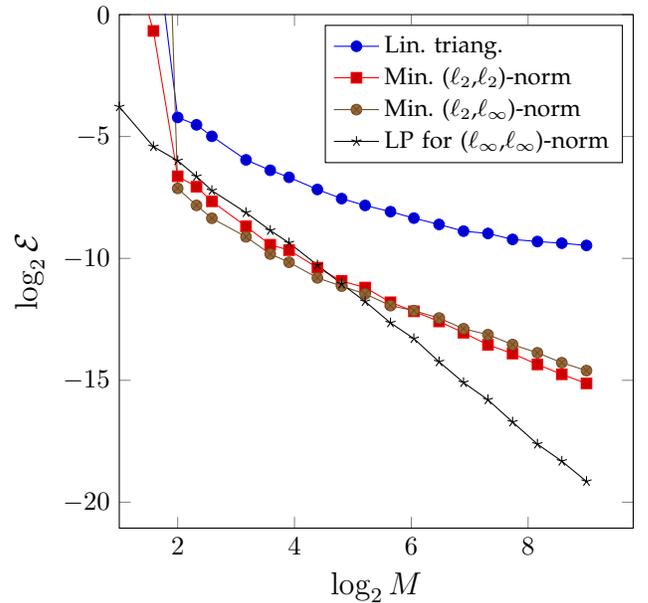
\begin{figure}[t!]
	\centering
	\begin{center}
\begin{tikzpicture}[scale=1]
	\begin{axis}[
		width=0.95\columnwidth,
        height=0.95\columnwidth,
		xlabel=\large{$\log_2 M$},
		ylabel={\large{$\log_2 \mathcal{E}$}},
		legend style={font=\small},xmin=1,ymax=0,
		legend cell align=left,
		]

		\addplot coordinates{
(1.00000000, 3.40256817)
(1.58496250, 3.86683856)
(2.00000000, -4.21874012)
(2.32192809, -4.52210287)
(2.58496250, -4.99246218)
(3.16992500, -5.95969289)
(3.58496250, -6.38606267)
(3.90689060, -6.67393136)
(4.39231742, -7.17239538)
(4.80735492, -7.54792305)
(5.20945337, -7.82784836)
(5.64385619, -8.08345356)
(6.04439412, -8.34869603)
(6.47573343, -8.60810066)
(6.89481776, -8.88317615)
(7.31288296, -8.97711701)
(7.73470962, -9.22085232)
(8.15987134, -9.30872814)
(8.57742883, -9.37702834)
(9.00000000, -9.46868942)
		};
		\addlegendentry{\small Lin. triang.}
		
		\addplot coordinates{
(1.00000000, 4.14483112)
(1.58496250, -0.67055659)
(2.00000000, -6.63090307)
(2.32192809, -7.06065164)
(2.58496250, -7.66051927)
(3.16992500, -8.68658573)
(3.58496250, -9.44023576)
(3.90689060, -9.66341200)
(4.39231742, -10.38000349)
(4.80735492, -10.92885032)
(5.20945337, -11.19613516)
(5.64385619, -11.80034709)
(6.04439412, -12.17117075)
(6.47573343, -12.58623274)
(6.89481776, -13.05025505)
(7.31288296, -13.54527823)
(7.73470962, -13.90473023)
(8.15987134, -14.35058553)
(8.57742883, -14.75557540)
(9.00000000, -15.12857285)
		};
		\addlegendentry{\small Min. ($\ell_2$,$\ell_2$)-norm}
		
		\addplot coordinates{
(1.00000000, 30.74739036)
(1.58496250, 24.50536271)
(2.00000000, -7.12835768)
(2.32192809, -7.82558521)
(2.58496250, -8.35394305)
(3.16992500, -9.11513048)
(3.58496250, -9.82048692)
(3.90689060, -10.15735553)
(4.39231742, -10.79887951)
(4.80735492, -11.13680694)
(5.20945337, -11.45653377)
(5.64385619, -11.93774379)
(6.04439412, -12.14407076)
(6.47573343, -12.44529969)
(6.89481776, -12.88533685)
(7.31288296, -13.12896048)
(7.73470962, -13.53471936)
(8.15987134, -13.87278818)
(8.57742883, -14.27933842)
(9.00000000, -14.60119634)
		};
		\addlegendentry{\small Min. ($\ell_2$,$\ell_\infty$)-norm}
	
		\addplot coordinates{
(1.00000000, -3.78312071)
(1.58496250, -5.41737540)
(2.00000000, -5.99233329)
(2.32192809, -6.64016672)
(2.58496250, -7.21895096)
(3.16992500, -8.12242159)
(3.58496250, -8.85286839)
(3.90689060, -9.36249405)
(4.39231742, -10.27329659)
(4.80735492, -11.04537671)
(5.20945337, -11.76883966)
(5.64385619, -12.64353293)
(6.04439412, -13.29033376)
(6.47573343, -14.24101884)
(6.89481776, -15.09353160)
(7.31288296, -15.79901022)
(7.73470962, -16.71056084)
(8.15987134, -17.61711297)
(8.57742883, -18.31766221)
(9.00000000, -19.14800516)
		};
		\addlegendentry{\small LP for ($\ell_\infty$,$\ell_\infty$)-norm}
		
	\end{axis}
\end{tikzpicture}
	\end{center}
	
	\caption{Verification of Theorem~\ref{th_upbound_3d}. Expected squared error ($\mathcal{E}$) as a function of the number of cameras ($M$).
	}
	\label{fig_verification_thm2}
\end{figure}

\bibliographystyle{ieeetr}   
\bibliography{crmg}   

\begin{IEEEbiography}[{\includegraphics[width=1in,height=1.25in,clip,keepaspectratio]{./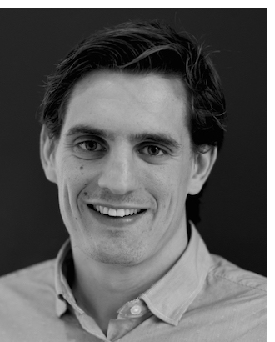}}]{Adam Scholefield}
	    is currently a post-doctoral researcher in the Audiovisual Communications Laboratory at the \'Ecole Polytechnique F\'ed\'erale de Lausanne (EPFL), Switzerland. He received the MEng. degree in Electrical and Electronic Engineering from Imperial College London, in 2007 and, after a brief interruption of studies to represent Team GB in the 2012 Olympics, the Ph.D. degree, from the same university, in 2013. His research interests include computational imaging, computer vision, localisation, optics and sampling theory.
\end{IEEEbiography}

\begin{IEEEbiography}[{\includegraphics[width=1in,height=1.25in,clip,keepaspectratio]{./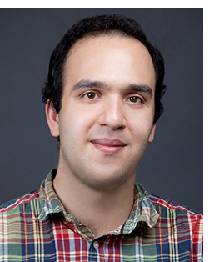}}]{Alireza Ghasemi}
	    is a data scientist at ELCA Informatique, Lausanne, Switzerland. Before, he joined EPFL in 2011 as a PhD candidate at the Audiovisual Communications Laboratory and finished his studies in 2016, under the supervision of Prof. Martin Vetterli and Dr. Adam James Scholefield. He received his BSc in Software Engineering and MSc in Artificial Intelligence, in 2009 and 2011 respectively, from the Sharif University of Technology, Tehran, Iran. His research interests include machine learning, 3-D computer vision, geometric reconstruction, natural language processing, and light-field image processing. 
\end{IEEEbiography}
    
\begin{IEEEbiography}[{\includegraphics[width=1in,height=1.25in,clip,keepaspectratio]{./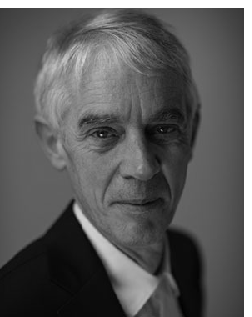}}]{Martin Vetterli}
     received the EE degree from ETHZ in 1981, the MS from Stanford University in 1982, and the Doctorate from EPFL in 1986. He was an Assistant/Associate  Professor in EE at Columbia University, and then an Associate/Full Professor in EECS at the UC Berkeley. In 1995, he joined the EPFL as a Full Professor. From 2004 to 2011 he was VP of EPFL for international affairs, and from 2011 to 2012, he was the Dean of IC. From 2013 to 2016 he was President of the Swiss NSF and since January 2017, he is President of EPFL. He works in the areas of electrical engineering, computer sciences and applied mathematics and this led to about 170 journals papers, as well as about 30 patents that led to technology transfer to high-tech companies and the creation of several start-ups. He is also the co-author of three textbooks.  His prizes include best paper awards from EURASIP in 1984 and of the IEEE Signal Processing Society in 1991, 1996 and 2006, the Swiss National Latsis Prize in 1996, the SPIE Presidential award in 1999, the IEEE SP Technical Achievement Award in 2001 and the IEEE SP Society Award in 2010 and the IEEE Jack S. Kilby Signal Processing Medal in 2017. He is a Fellow of IEEE, of ACM, ISI highly cited researcher and a foreign member of the NAE.
\end{IEEEbiography}

\end{document}